\title{\LARGE \bf
DroneDiffusion: Robust Quadrotor Dynamics Learning with Diffusion Models
}
\author{Avirup Das$^{*1}$, Rishabh Dev Yadav$^{*1}$, Sihao Sun$^{2}$, Mingfei Sun$^{1}$, Samuel Kaski$^{1,3}$, Wei Pan$^{1}$
\thanks{$^{1}$ Department of Computer Science, The University of Manchester, United Kingdom {\tt \footnotesize (\{avirup.das, rishabh.yadav\}@postgrad.manchester.ac.uk, \{mingfei.sun,\hspace{-0.5mm}samuel.kaski,\hspace{-0.75mm}wei.pan\}@manchester.ac.uk}).  }
\thanks{$^{2}$ Department of Cognitive Robotics, Delft University of Technology, Netherlands ({\tt \footnotesize s.sun-2@tudelft.nl).} }
\thanks{$^{3}$ Department of Computer Science, Aalto University, Finland. }
\thanks{$^{*}$ Equal contribution. }
}
\newcommand{\bmx}[1]{\begin{bmatrix}#1\end{bmatrix}}
\newcommand{\norm}[1]{\left\lVert#1\right\rVert}
\newcommand{\fbr}[1]{\left(#1\right)}
\newcommand{\tbr}[1]{\left[#1\right]}
\newcommand{\sbr}[1]{\left\{#1\right\}}
\newcommand{\expectation}[2]{\mathop{\mathbb{E}}_{#2}\tbr{#1}}
\newtheorem{theorem}{Theorem}[section]
\newtheorem{assumption}[theorem]{Assumption}
\newtheorem{remark}{Remark}
\newcommand{\bz}{\mathbf{z}}
\DeclareMathOperator{\diag}{diag}
\algnewcommand{\Initialise}[1]{%
  \State \textbf{Initialise:}
  \Statex \hspace*{\algorithmicindent}\parbox[t]{.8\linewidth}{\raggedright #1}
}
\begin{document}

\maketitle
\thispagestyle{empty}
\pagestyle{empty}

%%%%%%%%%%%%%%%%%%%%%%%%%%%%%%%%%%%%%%%%%%%%%%%%%%%%%%%%%%%%%%%%%%%%%%%%%%%%%%%%
\begin{abstract}
    An inherent fragility of quadrotor systems stems from model inaccuracies and external disturbances. These factors hinder performance and compromise the stability of the system, making precise control challenging. Existing model-based approaches either make deterministic assumptions, utilize Gaussian-based representations of uncertainty, or rely on nominal models, all of which often fall short in capturing the complex, \emph{multimodal} nature of real-world dynamics. This work introduces \emph{DroneDiffusion}, a novel framework that leverages conditional diffusion models to learn quadrotor dynamics, formulated as a sequence generation task. DroneDiffusion achieves superior generalization to unseen, complex scenarios by capturing the temporal nature of uncertainties and mitigating error propagation. We integrate the learned dynamics with an adaptive controller for trajectory tracking with stability guarantees. Extensive experiments in both simulation and real-world flights demonstrate the robustness of the framework across a range of scenarios, including unfamiliar flight paths and varying payloads, velocities, and wind disturbances. \\
    Project page: \url{https://sites.google.com/view/dronediffusion}
\end{abstract}

\section{INTRODUCTION}

Robust and reliable control of quadrotors is crucial for their expanding role in real-world applications, ranging from autonomous inspection to agile maneuvering~\cite{yao2024autonomous, idrissi2022review}. Achieving this requires designing accurate dynamic models that facilitate precise trajectory tracking and maneuver execution~\cite{torrente2021data}.  However, the development of such models is hindered by the complex interplay of aerodynamic effects, unmodeled dynamics, and external disturbances. Despite the use of uncertainty bounds and adaptation laws, Model-Based control approaches~\cite{labbadi2020robust, xie2021adaptive} remain difficult to implement due to the poorly characterized and stochastic uncertainties inherent in real-world quadrotor environments.

Recent efforts have leveraged Gaussian Processes (GPs) to model aerodynamic effects~\cite{torrente2021data, crocetti2023gapt}, unknown disturbances~\cite{schmid2022real}, and unmodeled dynamics~\cite{wang2018safe}. While these methods provide a convenient data-driven strategy, the nonparametric nature of the GP-based model leads to high computational complexity, which increases with the dataset size, requiring the careful selection of representative training points~\cite{schmid2022real}. As an alternative, Deep Neural Networks (DNNs) have been explored for system identification~\cite{punjani2015deep, bansal2016learning, li2017deep}, with improvements from Spectrally Normalized DNNs~\cite{shi2019neural}. However, despite their advantages, DNNs establish a deterministic mapping from the sensory inputs to the uncertain dynamics, limiting their adaptability to real-world flight conditions. 
\begin{figure}
    \centering
\includegraphics[width=0.96\linewidth]{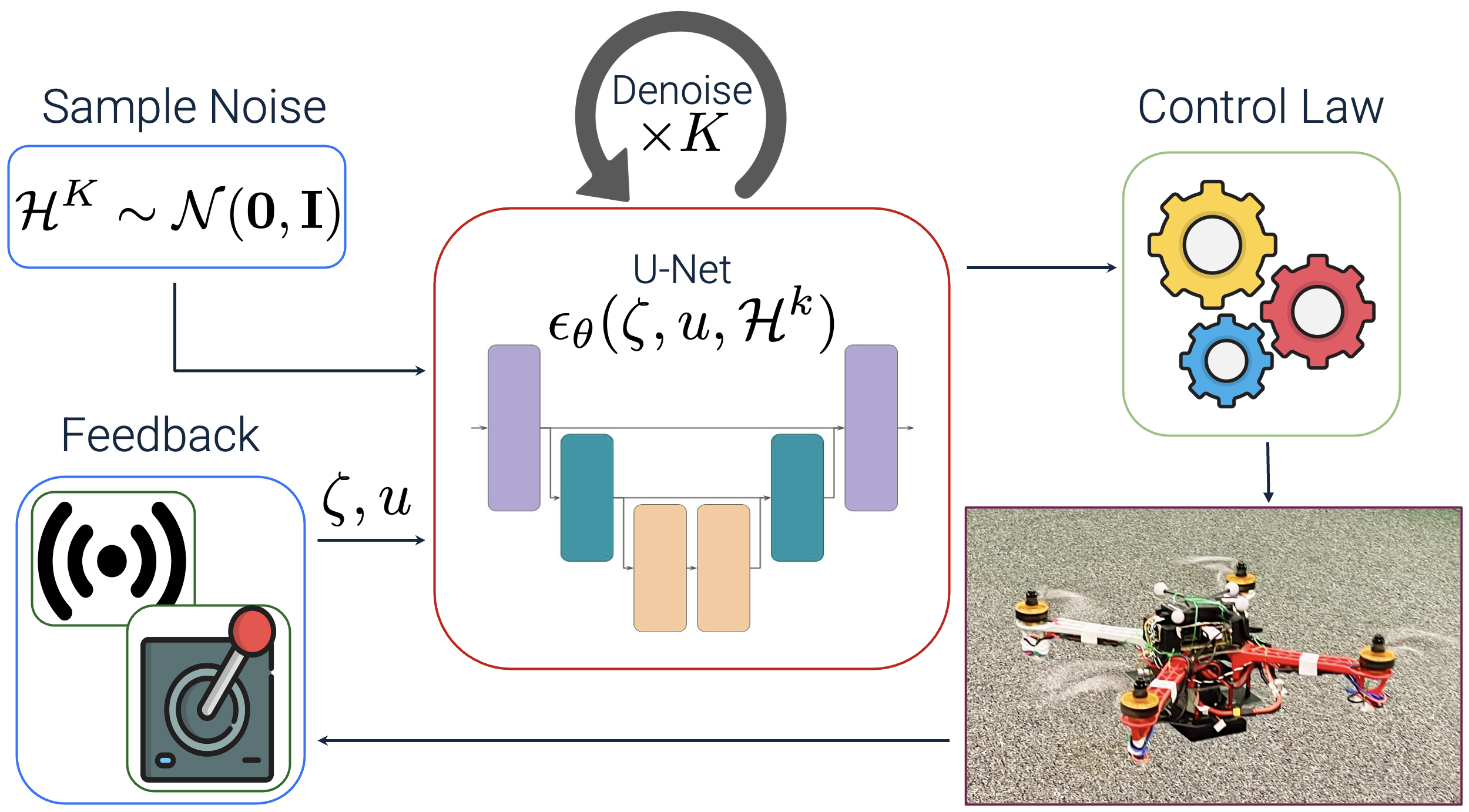}
    \caption{
DroneDiffusion leverages a diffusion model trained on sequences of sensory (state) data ($\zeta$), control inputs ($u$), and corresponding unified residual dynamics terms $\mathcal{H}$ to learn the conditional distribution $p(\mathcal{H} | \zeta, u)$. During flight, the framework uses a U-Net based noise predictor $\epsilon_\theta$ to denoise $\mathcal{H}^K$ (sampled from standard Gaussian $\mathcal{N}(\mathbf{0},\mathbf{I})$) in $K$ steps using $(\zeta, u)$ to generate a prediction of $\hat{\mathcal{H}}$. $\hat{\mathcal{H}}$ is then used by a nonlinear controller in a receding-horizon fashion for closed-loop feedback control of the quadrotor.}
\label{fig:diagram3}
\vspace{-6mm}
\end{figure}
Sequence models like Recurrent Neural Networks (RNNs) have also been effective in modeling dynamics by capturing long-range dependencies through multistep predictions~\cite{mohajerin2018deep, mohajerin2019multistep}. However, they require proper state initialization to avoid vanishing gradients. Transformers (e.g. GPT~\cite{GPT2}) offer improvements over RNNs in capturing long-range dependencies and overall robustness but remain autoregressive in nature which poses challenges in dynamics modeling. Autoregressive predictions can lead to cascading errors, where inaccuracies in initial predictions propagate through the sequence, causing progressively larger deviations over time~\cite{venkatraman2015improving,janner2022diffuser}.

In real-world applications, the uncertainty in quadrotor dynamics is often governed by latent and difficult-to-measure factors such as airflow and downwash~\cite{sun2019quadrotor, Bauersfeld2021NeuroBEMHA, bauersfeld2024robotics}. Developing precise nominal models to capture system dynamics presents significant challenges, necessitating a comprehensive knowledge of system parameters~\cite{saviolo2023active, saviolo2023learning} and limiting the ability of these methods to adapt to varying environmental conditions. Moreover, existing data-driven approaches~\cite{torrente2021data, crocetti2023gapt, wang2018safe, schmid2022real} often rely on deterministic models or assume predefined Gaussian structures for uncertainty, which limits their capacity to capture the full complexity of the dynamics. Additionally, these methods frequently rely on partial uncertainty estimates derived from prior system dynamics information~\cite{mohajerin2018deep, shi2019neural},  further constraining their adaptability to real-world scenarios. A toy experiment with a one-dimensional system and heavy-tailed disturbance (Figure~\ref{fig.toy}) illustrates how standard methods struggle to capture the full distribution of dynamics, highlighting the need for robust models that do not rely on strong assumptions about the underlying dynamics and structure of disturbances.

\begin{figure}[h]
    \centering
    \includegraphics[width=0.99\linewidth]{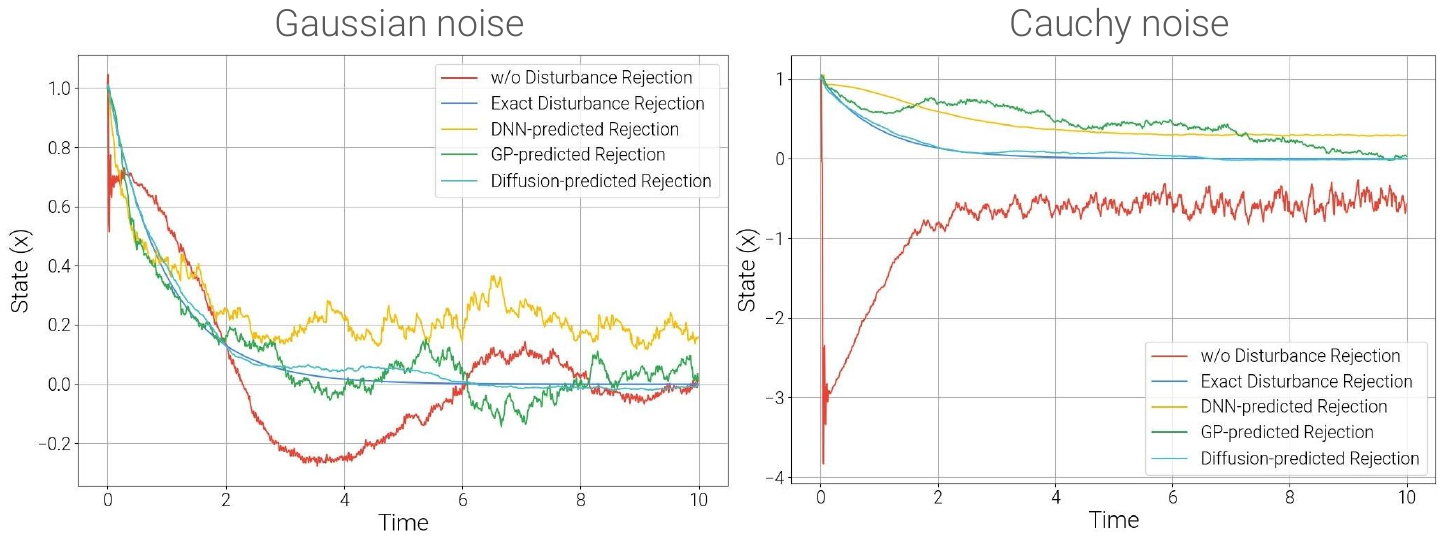}
    \caption{Convergence comparison of the system $\dot{x}(t) = ax(t) + u + d$ with $8$s of training data at $100$Hz. Control inputs are generated by: $u=-Kx- \hat{d}$, where $a = 1$ and $K = 2$. \textit{Left:} $d$ is Gaussian.  \textit{Right:} $d$ is a Cauchy disturbance. GP performs well in the presence of Gaussian disturbance but struggles with heavy-tailed disturbances due to kernel limitations. DNN converges with an offset for heavy-tailed disturbances but struggles to converge effectively when faced with Gaussian disturbance.}
    % \vspace{-0.6cm}
    \label{fig.toy}
\vspace{-2mm}
\end{figure}

To address these limitations, we propose using diffusion models~\cite{sohl2015deep, ho2020denoising, song2020score} for quadrotor dynamics modeling, which have recently demonstrated exceptional performance in fields such as image generation~\cite{dhariwal2021diffusion} and offline reinforcement learning~\cite{janner2022diffuser, ajay2023is}. In this work, we apply diffusion models for the first time to model the complex, multimodal distributions present in quadrotor flight dynamics. Leveraging their ability to capture intricate data distributions and generate high-fidelity samples, we introduce \textbf{DroneDiffusion}, a framework that models quadrotor dynamics as a \textit{conditional probabilistic distribution} and integrates it into a \textit{nonlinear control framework}.
The key contributions of this paper can be summarized as follows:
\begin{itemize}
\item We formulate dynamic model learning as a conditional sequence generation task using diffusion models, accurately characterizing the multimodal and stochastic nature of quadrotor dynamics in the real world. 

\item Our approach integrates the learned dynamic model with adaptive control for trajectory tracking, ensuring stability and adaptability to varying masses and velocities without extensive retraining.
\end{itemize}
Comprehensive real-time experiments validate the robustness of our framework in the face of model uncertainties, demonstrating its ability to accurately track complex trajectories, adapt to dynamic variations, and maintain resilience against wind disturbances, achieving significant advancements over existing solutions.

\section{System Identification with Diffusion Models}
\subsection{Preliminaries on diffusion models}
Denoising Diffusion Probabilistic Models (DDPMs)~\cite{ho2020denoising} operate through forward and reverse processes. The forward process, $q(\bz^k|\bz^{k-1}) \coloneqq \mathcal{N}(\bz^k|\sqrt{1-\beta^k}\bz^{k-1}, \beta^k\mathbf{I})$, progressively transforms the data distribution $q(\bz^0)$ into a standard Gaussian $\mathcal{N}(\mathbf{0}, \mathbf{I})$ over $K$ steps, using a predefined variance schedule $\{\beta^k\}_{k=1}^K$. The reverse process, $p_{\theta}(\bz^{k-1}|\bz^k) \coloneqq \mathcal{N}(\bz^{k-1}|\mu_\theta(\bz^k, k), \Sigma^k)$, parameterized by $\theta$, reconstructs the data by estimating the mean $\mu_\theta$. Training optimizes a variational lower bound on the likelihood $\log p_\theta$ via the following objective: \begin{equation}\label{eqn:ho_simple} \mathcal{J}_{\rm denoise}(\theta)\coloneqq \expectation{\norm{\epsilon - \epsilon_\theta(\bz^k, k)}^2}{k\sim\mathcal{U}(1, K), \bz^0 \sim q, \epsilon \sim \mathcal{N}(\mathbf{0}, \mathbf{I})}, \end{equation} where $\epsilon_\theta(\bz^k, k)$, a neural network parameterized by $\theta$, predicts the noise $\epsilon$ injected during the forward process, enabling the recovery of $\bz^{k-1}$. The reverse process can be conditioned on additional variables $c$ by modifying $\mu_\theta$ to incorporate $c$, resulting in a conditional reverse process: $p_\theta(\bz^{k-1}|\bz^k, c) \coloneqq \mathcal{N}(\bz^{k-1}|\mu_\theta(\bz^k, c, k), \Sigma^k)$.

While DDPMs are commonly applied to image generation, we adapt them to model quadrotor dynamics, inspired by Diffusion Policy~\cite{chi2023diffusionpolicy}. Our formulation introduces two key modifications: (1) redefining the output to represent quadrotor residual dynamics, and (2) conditioning the denoising process on state observations $\boldsymbol{\zeta}_t$ and past control inputs $\mathbf{u}_t$. These changes are discussed below, with an overview provided in Figure~\ref{fig:diagram2}.
\begin{figure}[t]
    \centering
    \includegraphics[width=1\linewidth]{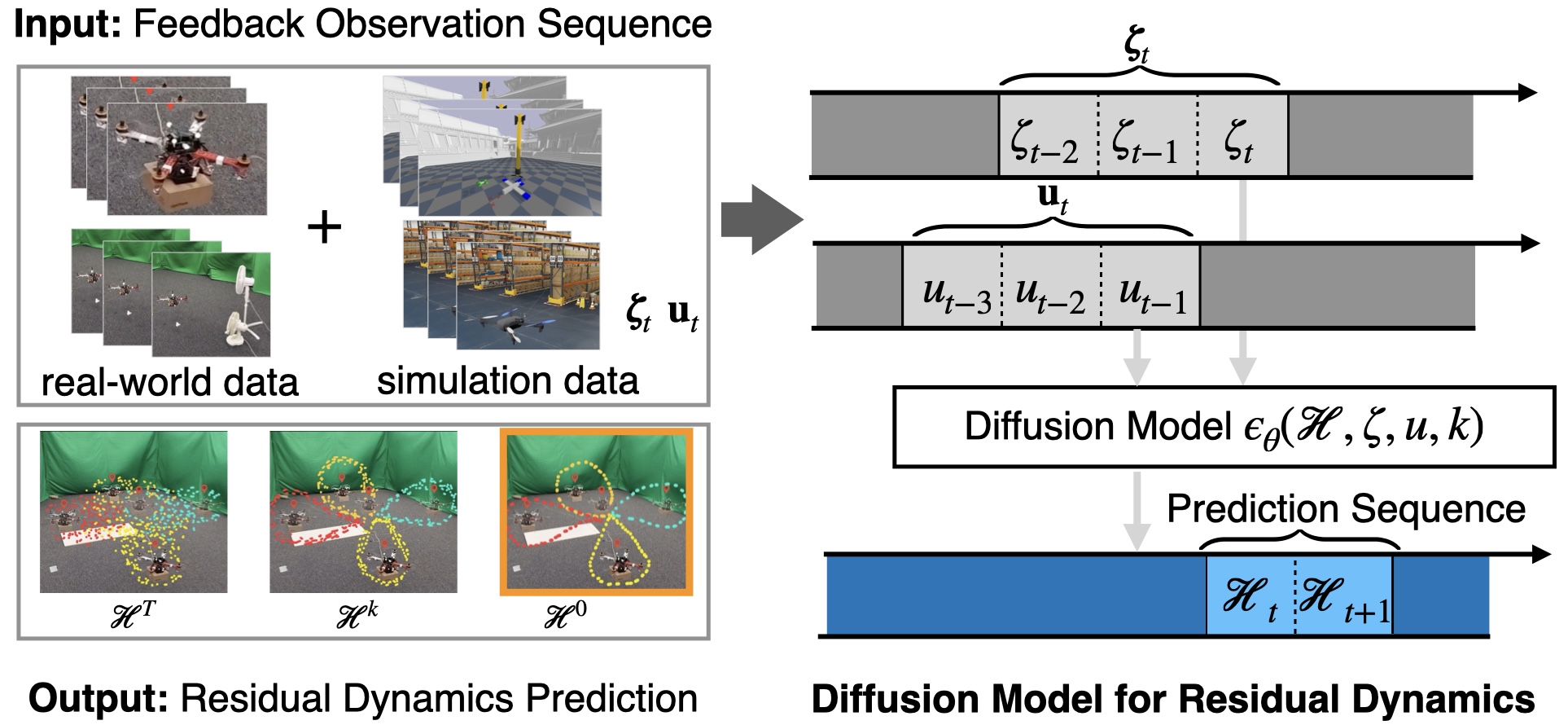}
     \caption{Diffusion Model for Dynamics Learning. A general formulation (see Section~\ref{sec:learningsequence}). At time step $t$, the residual dynamics takes the latest state and control input observation data $\boldsymbol{\zeta}_t$ and $\mathbf{u}_t$ as input and outputs a sequence of residual dynamics prediction $[\boldsymbol{\mathcal{H}}_t,\boldsymbol{\mathcal{H}}_{t+1},\cdots ]$.}
\label{fig:diagram2}
\vspace{-2mm}
\end{figure}

\subsection{Dynamic Model Learning as Sequence Generation}\label{sec:learningsequence}
The position dynamics of a quadrotor~\cite{shi2019neural} are governed by the states: position $p \triangleq \left[p_x,, p_y,, p_z \right]^{\top} \in \mathbb{R}^3$, velocity $v \in \mathbb{R}^3$, and attitude rotation matrix $R \in \mathrm{SO}(3)$ as follows:
\begin{align} 
\dot{p} &= v, & m\dot{v} &= m g_v + R f_u + f_a, \label{eq.quadrotor position_intro} 
\end{align} where $m$ is the mass, $g_v = [0, 0, -g]^\top$ is the gravity vector, $f_u = [0, 0, T]^\top$ represents thrust, and $f_a$ encapsulates unknown aerodynamic forces. To model the uncertain dynamics without assuming  prior structural knowledge of $f_a$, we introduce a nominal mass $\bar{m} \in \mathbb{R}^{+}$, reformulating \eqref{eq.quadrotor position_intro} as: 
\begin{align} 
\bar{m} \ddot{p} + \mathcal{H}(p, \dot{p}, \ddot{p}) &= u, \label{eq.p_tau_rearrage} 
\end{align} 
where $\mathcal{H}(p, \dot{p}, \ddot{p}) \triangleq (m - \bar{m}) \ddot{p} - m g_v - f_a$, and $u = R f_u$. The term $\mathcal{H}$ captures state-dependent uncertainties arising from aerodynamic effects $f_a$ (e.g., drag, ground effect, wind) and imperfect information of mass $m$ (e.g., payloads),  into a single variable. This reformulation facilitates a data-driven approach to infer the distributional properties of $\mathcal{H}$, which is inherently nonlinear and complex. Unlike single-step dynamic models, which often lack the temporal context required to infer the underlying causal structure of the dynamics and are prone to compounding errors, our approach frames the estimation of $\mathcal{H}$ as a sequence generation task. This allows us to mitigate error propagation and make temporally consistent predictions. 

For a time-horizon $H$, we define a sequence $\mathcal{S}$ comprising uncertainties $\mathcal{H}$, control inputs $u$, and sensory measurements $\zeta = \{p, \dot{p}, \ddot{p}\}$: \begin{equation}
\label{eqn:sequence} \mathcal{S} = \bmx{\zeta_1 & \zeta_2 & \ldots & \zeta_H\\
u_1 & u_2 & \ldots & u_H\\
\mathcal{H}_1 & \mathcal{H}_2& \ldots & \mathcal{H}_H}. 
\end{equation} This induces a tight coupling between $\mathcal{H}$ and $(\zeta,u)$ that makes the estimation of $\mathcal{H}$ equivalent to sampling from the \emph{conditional distribution} $p(\mathcal{H}\vert \zeta,u)$. By predicting $\mathcal{H}$ over a horizon, we achieve temporally consistent predictions, ensuring a stable flight. During flight, the control input at the current timestep $u_t$, is unavailable for estimating $\mathcal{H}$, so we use the control input from the previous timestep $u_{t-1}$, assuming one-step difference of control signal are small and bounded \cite{shi2019neural}. For simplicity, we retain the original notation in \eqref{eqn:sequence}. At the first timestep, the previous control input is taken as zero.

\subsection{Diffusion Model for Multimodal Dynamics}

\begin{figure}[!h]
\centering
\includegraphics[width=0.95\linewidth]{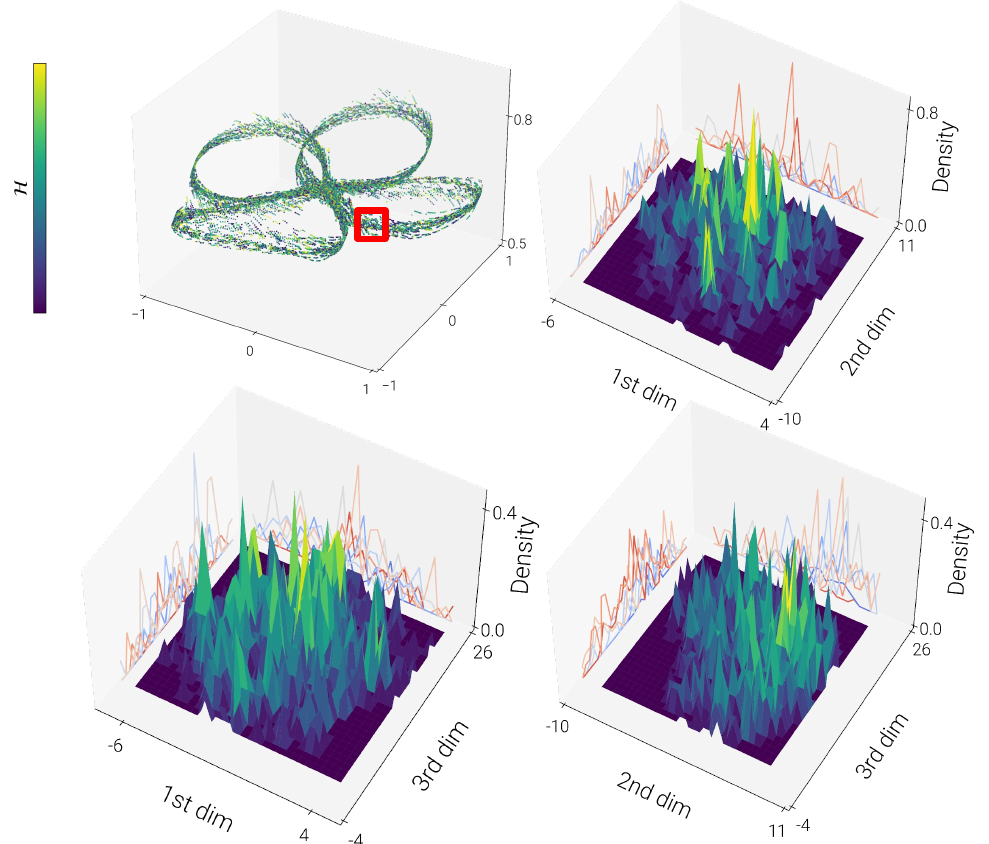}
    \caption{\textit{Top left:} Position of the quadrotor during real-world flights with the uncertainties at each timestep $\mathcal{H}_t$ highlighted. The following plots illustrate the empirical density of $\mathcal{H}$ observed in the {\color{red}red box} along two dimensions. \textit{Top right:} Density along first and second dimension, \textit{Bottom left:} Density along first and third dimension, and \textit{Bottom right:} Density along second and third dimension. Even under ideal conditions with no external disturbances,  $\mathcal{H}$ observed in a given flight path segment (expected to have similar control inputs and sensor feedback) exhibits a multimodal nature.
    }
    \label{fig:multimodal}
\end{figure}
Quadrotor dynamics is inherently multimodal, driven by a combination of environmental and mechanical factors that introduce significant uncertainties. Turbulent wind gusts, temperature-induced variations in air density, and aerodynamic effects such as propeller-induced flows and ground effects result in nonlinearities. Mechanical factors, including motor degradation, IMU sensor noise, and control delays, further complicate the structure of $\mathcal{H}$. Many of these parameters—such as air density, humidity, and downwash effects—are difficult to measure with onboard sensors, and their variations due to altitude and temperature fluctuations amplify uncertainties during flight. Repeated flights along an infinity-shaped trajectory with a fixed PID controller (Figure \ref{fig:multimodal}, top left) reveal varying levels of $\mathcal{H}$ even at identical flight segments. The remaining panels in Figure \ref{fig:multimodal} confirm the multimodal structure of $\mathcal{H}$ in real-world operating regimes. The empirical density of $\mathcal{H}$ for a specific segment of the trajectory (highlighted in {\color{red}red}) reveals distinct modes, indicating variability in the dynamics even without external disturbances~(Refer to Appendix A] for quantitative analysis and further discussion). 

\begin{figure}[!h]
    \centering
    \includegraphics[width=0.95\linewidth]{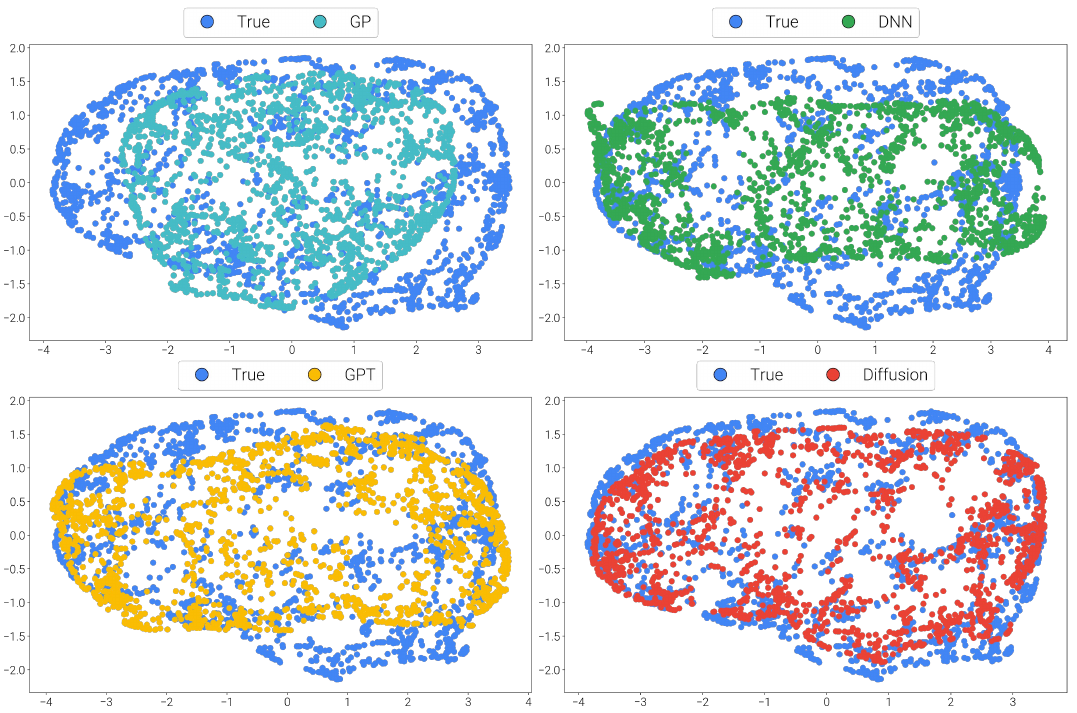}
    \caption{t-SNE plots of $\mathcal{H}$ observed from real-world flight data ({\color{red}red box} in Figure \ref{fig:multimodal}) and $\hat{\mathcal{H}}$ obtained from the baselines and the proposed Diffusion model. While GPT fairly estimates the uncertainties in the dynamics, the proposed diffusion model accurately captures the underlying distribution.}
    \label{fig:tsne}
\end{figure}

If $\mathcal{H}_t$ is modeled as independent multimodal distributions, such as in Gaussian Mixture Models, successive predictions may originate from different modes, leading to temporal inconsistencies in the dynamics~\cite{chi2023diffusionpolicy}. To address this issue, we leverage the ability of DDPMs to capture the distribution of $\mathcal{H}$. Unlike traditional methods, DDPMs leverage Stochastic Langevin Dynamics~\cite{welling2011bayesian} in the denoising process, allowing them to capture multimodal distributions that naturally reflect the inherent stochasticity of the system. Building on the success of observation-conditioned visuomotor policies~\cite{chi2023diffusionpolicy}, we use the DDPM to approximate the conditional distribution $p(\mathcal{H} \vert \zeta, u)$, instead of the joint distribution $p(\mathcal{H}, \zeta, u)$ typically done in planning~\cite{janner2022diffuser}. This allows us to better capture the uncertainties by conditioning on sensory inputs $\zeta$ and control inputs $u$. To achieve this, we modify the reverse process of DDPM as:
\begin{equation}\label{eq:denoising}
    \mathcal{H}^{k-1}= \frac{1}{\sqrt{\alpha^k}}\fbr{\mathcal{H}^k-\frac{1-\alpha^k}{\sqrt{1-\bar{\alpha}^k}}\epsilon_\theta(\zeta,u,\mathcal{H}^k,k)+\sqrt{\beta^k}\mathbf{z}},
\end{equation}
where $\alpha^k\coloneqq1-\beta^k, \bar{\alpha}^k\coloneqq\prod_{i=1}^k\alpha^i$ and $\mathbf{z}\sim\mathcal{N}(\mathbf{0},\mathbf{I})$. With an offline dataset $\mathcal{D}=\sbr{\mathcal{S}}_{i=1}^M$ and the denoising process \eqref{eq:denoising}, the simplified objective \eqref{eqn:ho_simple} becomes:
\begin{equation}\label{eqn:dynamics_objective}
    \mathcal{J}(\theta)= \expectation{\norm{\epsilon-\epsilon_\theta(\zeta,u,\mathcal{H}^k,k)}^2}{\substack{k\sim\mathcal{U}(1,K), \epsilon\sim\mathcal{N}(\mathbf{0},\mathbf{I})\\
    \fbr{\zeta,u,\mathcal{H}^0}\sim\mathcal{D}}}.
\end{equation}
We now discuss the deployment of the trained diffusion model in real-world flight scenarios. The process begins by specifying the desired waypoints $\zeta_d$ and collecting real-time sensory data $\zeta$ for trajectory tracking. Using the trained model, we generate a conditional sample $\hat{\mathcal{H}}$ via \eqref{eq:denoising}, where the covariance follows the cosine schedule proposed in \cite{nichol2021improved}. The uncertainty for the first timestep $\mathcal{\hat{H}}_1$ from the resulting estimate is then utilized to compute the control inputs $u$. This procedure is iteratively applied within a receding-horizon control framework, as depicted in Figure \ref{fig:diagram3}, ensuring ongoing adaptation to the dynamic flight conditions. Figure \ref{fig:tsne} illustrates the effectiveness of the proposed diffusion model in capturing the intricate structure of $\mathcal{H}$.

\subsection{Diffusion Model-based Adaptive Control (DM-AC)}
\label{sec.control_design}
Given the estimate $\hat{\mathcal{H}}$ obtained from the proposed diffusion model, we now proceed to design a closed-loop feedback controller to ensure that the position tracking error $e_p(t) \triangleq p(t) - p_d (t)$ converges to zero, ideally at an exponential rate. This can be achieved through the design of a sliding variable $s$, which defines a manifold that guarantees the desired convergence rate of the tracking error:
\begin{align}
     s(t) &= \dot{e}_p(t) + \Phi e_p(t), 
     \label{sliding_var}
 \end{align}
where $\Phi$ is a user-defined positive definite gain matrix. 
The adaptive controller can be derived as:
\begin{subequations}
\begin{align}
    u(t) &=   -\Lambda s + \bar{m} (\ddot{p}_d - \Phi \dot{e}_p) + \hat{\mathcal{H}} -  \hat{\sigma}(t) (s/ \norm{s}),   \label{input_main}\\ 
    \dot{\hat{\sigma}}(t) &= \norm{s} - \nu \hat{\sigma}(t), \quad \hat{\sigma}(0) > 0, \label{adaptive_law_main}
\end{align}
\label{control_law_adaptive_main}
\end{subequations} 
where $\Lambda$ is a positive definite user-defined gain matrix, $\nu \in  \mathbb R^{+} $ are user-defined scalars and $\hat{\sigma}(t)$ is adaptive gain. This not only ensures the dynamic compensation through $\hat{\mathcal{H}}$ but also accounts for estimation errors in $\hat{\mathcal{H}}$ resulting from unknown external disturbances. Theorem \ref{theorem:adaptive_control} ensures the closed-loop stability of the system using DM-AC (Refer to Appendix B for proof).
\begin{assumption} \label{assum1}
The desired states along the position trajectory $p_d$, $\dot{p}_d$, and $\ddot{p}_d$ are bounded. Also, the external disturbances are bounded.
\end{assumption}

\begin{assumption} \label{assum2}
    The learning error of $\hat{\mathcal{H}}$ over the
compact sets $\zeta \in \mathcal{Z}$, $u \in \mathcal{U}$ is upper bounded by $\sigma_m = sup_{\zeta, u} \norm{\sigma }$, where $\sigma = \mathcal{H} - \hat{\mathcal{H}}$.
\end{assumption}
 \begin{theorem} \label{theorem:adaptive_control}
Under Assumptions \ref{assum1}, \ref{assum2} the closed-loop trajectories of \eqref{eq.p_tau_rearrage}  with control law \eqref{input_main} along with the adaptive law \eqref{adaptive_law_main} are Uniformly Ultimately Bounded (UUB) (Definition $4.6$ in \cite{khalil2015nonlinear}).
 \end{theorem}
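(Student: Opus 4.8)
The plan is a Lyapunov argument in the style of adaptive sliding‑mode control, specialized to the residual‑dynamics form~\eqref{eq.p_tau_rearrage}. First I would derive the closed‑loop dynamics of the sliding variable~\eqref{sliding_var}: differentiating $s$ and substituting $\bar m\ddot p = u - \mathcal H$ from~\eqref{eq.p_tau_rearrage}, followed by the control law~\eqref{input_main}, the feedforward terms $\bar m(\ddot p_d - \Phi\dot e_p)$ cancel, leaving the compact error system
\begin{equation*}
\bar m\,\dot s = -\Lambda s - \sigma - \hat\sigma\,\frac{s}{\norm{s}}, \qquad \sigma \triangleq \mathcal H - \hat{\mathcal H},
\end{equation*}
with $\norm{\sigma}\le\sigma_m$ by Assumption~\ref{assum2}. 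I would also record that $\hat\sigma(t)>0$ for all $t\ge 0$: since~\eqref{adaptive_law_main} gives $\dot{\hat\sigma}\ge-\nu\hat\sigma$, the comparison lemma yields $\hat\sigma(t)\ge\hat\sigma(0)e^{-\nu t}>0$, so the unit‑vector term is well defined whenever $s\neq 0$.

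Next I would take the Lyapunov candidate $V = \tfrac12\bar m\norm{s}^2 + \tfrac12(\hat\sigma-\sigma_m)^2$. Along the closed‑loop trajectories, $\bar m\, s^\top\dot s = -s^\top\Lambda s - s^\top\sigma - \hat\sigma\norm{s} \le -\lambda_{\min}(\Lambda)\norm{s}^2 + \sigma_m\norm{s} - \hat\sigma\norm{s}$ by Cauchy--Schwarz and Assumption~\ref{assum2}; the adaptive law contributes $(\hat\sigma-\sigma_m)(\norm{s}-\nu\hat\sigma)$, so the $\pm(\hat\sigma-\sigma_m)\norm{s}$ terms cancel exactly and $\dot V \le -\lambda_{\min}(\Lambda)\norm{s}^2 - \nu(\hat\sigma-\sigma_m)\hat\sigma$. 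Writing $\hat\sigma = (\hat\sigma-\sigma_m)+\sigma_m$ and applying Young's inequality to the cross term $\nu\sigma_m(\hat\sigma-\sigma_m)$ turns this into
\begin{equation*}
\dot V \le -\lambda_{\min}(\Lambda)\norm{s}^2 - \tfrac{\nu}{2}(\hat\sigma-\sigma_m)^2 + \tfrac{\nu}{2}\sigma_m^2 \le -\kappa V + \tfrac{\nu}{2}\sigma_m^2,
\end{equation*}
with $\kappa \triangleq \min\sbr{2\lambda_{\min}(\Lambda)/\bar m,\ \nu} > 0$ (using that $\Lambda$ is positive definite, $\nu>0$, $\bar m>0$).

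Finally, the comparison lemma gives $V(t)\le V(0)e^{-\kappa t} + \nu\sigma_m^2/(2\kappa)$, so $s$ and $\hat\sigma$ are bounded and ultimately enter a ball whose radius is governed by $\sigma_m$, i.e.\ $(s,\hat\sigma)$ is UUB in the sense of Definition~4.6 in~\cite{khalil2015nonlinear}. Since $s = \dot e_p + \Phi e_p$ with $\Phi$ positive definite, the filter from $s$ to $e_p$ is input‑to‑state stable, so the ultimate bound on $s$ propagates to $e_p$ and then to $\dot e_p = s - \Phi e_p$; Assumption~\ref{assum1} guarantees that the reference signals and disturbances entering the construction stay bounded. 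I expect the main obstacle to be the discontinuity of $s/\norm{s}$ at $s=0$: $V$ is then only locally Lipschitz, so the estimate of $\dot V$ must be read in the nonsmooth (Filippov/Clarke) sense, which still goes through because $s^\top\fbr{\hat\sigma\,s/\norm{s}} = \hat\sigma\norm{s}\to 0$ as $s\to 0$; equivalently, one may replace $s/\norm{s}$ by the boundary‑layer surrogate $s/(\norm{s}+\delta)$ and recover the same bound up to an $O(\delta)$ residual. A secondary subtlety is well‑posedness of the closed loop, as $\mathcal H$ in~\eqref{eq.p_tau_rearrage} implicitly depends on $\ddot p$; this is absorbed into the standing assumption that the learning error $\sigma$ is bounded on the compact operating sets $\mathcal Z,\mathcal U$ (Assumption~\ref{assum2}).
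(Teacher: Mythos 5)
Your proposal is correct and follows essentially the same route as the paper's own proof in Appendix B: the same sliding-variable error dynamics, the same Lyapunov candidate $\mathcal{V}=\tfrac12 \bar m\norm{s}^2+\tfrac12(\hat\sigma-\sigma_m)^2$, the same cancellation of the $(\hat\sigma-\sigma_m)\norm{s}$ terms via the adaptive law, a completion-of-squares step equivalent to the paper's bound on $\nu\hat\sigma\sigma_m-\nu\hat\sigma^2$, and the same comparison-lemma conclusion $\dot{\mathcal{V}}\le-\kappa\mathcal{V}+\tfrac{\nu}{2}\sigma_m^2$ yielding UUB. Your added remarks on the positivity of $\hat\sigma(t)$, the propagation of the bound from $s$ to $e_p$, and the nonsmoothness of $s/\norm{s}$ at the origin (which the paper handles by the smooth surrogate $s/\sqrt{\norm{s}^2+\varpi}$ in a remark) are sound refinements rather than departures.
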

\begin{remark}
    While Assumption \ref{assum2} is necessary for ensuring system stability (Theorem \ref{theorem:adaptive_control}), it is well-founded and aligns with recent findings for Score-based Generative Models (SGM)~\cite{song2020score}. Works like \cite{lee2022convergence, de2022convergence, chen2023sampling} show that SGMs can achieve convergence to the data distribution by bounding the divergence between the learned and true distributions. This suggests that Assumption \ref{assum2} is satisfied when the training data is sufficiently representative of the underlying distribution of $\mathcal{H}$. Moreover, $\sigma_m$ is unknown for control design of \eqref{control_law_adaptive_main}.
\end{remark}
\begin{algorithm}
% \vspace{-3mm}
  \caption{Quadrotor Trajectory Tracking with DM-AC}\label{algorithm:diffusion_sample_adaptive}
  \begin{algorithmic}[1]
    \Require Noise model $\epsilon_\theta$, covariances $\Sigma^k$,  gains $\bar{m}, \Lambda, \Phi, {\nu}$
    \Statex \hspace{-6.9mm}{ \textbf{Initialize:}} $u=0, {\hat{\sigma}(0)}$
    \While{not done}
    \State Get desired way-point $\zeta_d = \fbr{p_d, \dot{p}_d, \ddot{p}_d}$
    \State Observe sensor feedback $\zeta$, initialize $\mathcal{H}^{\mathrm{T}}\sim\mathcal{N}(\mathbf{0},\mathbf{I})$
    \For{k=T,\ldots,1}
    \State $(\mu^{k-1},\Sigma^k) \gets \mathrm{Denoise}(\mathcal{H}^k, \epsilon_\theta(\zeta, u, \mathcal{H}^k,k))$ 
    \State $\mathcal{H}^{k-1}\sim \mathcal{N}(\mu^{k-1}, (1-\beta)\Sigma^{k-1})$
    \EndFor
    \State Derive $e_p$, $\dot{e}_p$ and calculate $s$ as in \eqref{sliding_var} 
    \State Calculate $\hat{\sigma}$ using \eqref{adaptive_law_main}
    \State Update $u$ using \eqref{input_main}
    \EndWhile
  \end{algorithmic}
\end{algorithm}
\vspace{-2mm}
Algorithm~\ref{algorithm:diffusion_sample_adaptive} details the use of the diffusion sample $\hat{\mathcal{H}}$ with the adaptive controller \eqref{control_law_adaptive_main} for trajectory tracking\footnote{An alternative control law is presented in Appendix C, demonstrating the flexibility of the framework.}. The desired total thrust and the desired attitude can be calculated as per \cite{mellinger2011minimum}. 

\section{Experiments}
The primary objective of our experiments is to rigorously evaluate the efficacy of the proposed methodology in meeting the performance expectations of a data-driven trajectory tracking framework. Specifically, we aim to address the following key questions: \textbf{(1)} How accurately can the framework track complex flight paths that were not encountered during training? \textbf{(2)} How well does the framework generalize to varying dynamic conditions and external disturbances beyond the training distribution? We conclude by studying practical runtime considerations of the framework.

\textbf{Simulation Setup:} We utilize \textit{Iris} quadrotor, deployed on the open-source \textit{PX4-Autopilot} flight control software, integrated within the Gazebo simulation environment.

\textbf{Hardware Setup:} The experimental setup includes an 8-camera OptiTrack motion capture system and a quadrotor (approximately 1.4 kg) featuring DYS D4215 - 650 KV brushless motors, powered by a 4S (120 C) LiPo battery, and equipped with 10-inch propellers on an F-450 frame, as shown in Figure \ref{fig:diagram3} (bottom-right). The quadrotor is equipped with a Pixhawk flight controller running PX4, an onboard Jetson Xavier NX computer running the \textit{MAVROS} package, and a host computer with an NVIDIA GeForce RTX 4080 for inference. Sensor data is transmitted from the onboard computer to the host computer via WiFi, where it is processed, and control inputs are sent back to the onboard computer at a frequency of 30 Hz. The hardware configuration reflects a standard setup typical of consumer drone systems. The PX4 is operated in offboard mode, receiving desired thrust and attitude commands from the position control loop \eqref{control_law_adaptive_main}. The built-in PX4 multicopter attitude controller (linear PID controller based on quaternion error) was executed at the default rate. Desired trajectories were generated using the open-source ROS package `\textit{mav\_trajectory\_generation}'\cite{mav_trajectory_generation}.

\textbf{Implementation:} We construct an offline dataset of sequences $\mathcal{S}$ by computing $\mathcal{H}$ from \eqref{eq.p_tau_rearrage} and train the diffusion model to minimize \eqref{eqn:dynamics_objective}. The noise predictor $\epsilon_\theta$ is parameterized by a temporal U-Net~\cite{janner2022diffuser}, which uses three repeated residual blocks, each with two temporal convolutions, followed by group normalization~\cite{wu2018group} and Mish activation~\cite{Misra2020MishAS}. Timestep and condition embeddings (16-dimensional vectors) are produced by separate 2-layer MLPs with 64 hidden units and Mish activation. We train $\epsilon_\theta$ using the Adam optimizer~\cite{kingma2014adam} with a learning rate of $2 \times 10^{-4}$ and a batch size of 256 for 50,000 steps, using $K = 20$ diffusion timesteps. The adaptive controller gains are set as: $\Phi = \diag\{1.5, 1.5, 1.2\}$, $\Lambda = \diag\{2.0, 2.0, 4.0\}$, with $\bar{m} = 1$, $\hat{\sigma}(0) = 0.1$, and $\nu = 2.0$. 

\textbf{Baselines:} For a baseline comparison, we implement a first-principles-based $\mathcal{L}_1$-Adaptive controller~\cite{wu2023l1}\footnote{https://github.com/sigma-pi/L1Quad}, which is widely recognized for its proven stability and performance in adaptive control tasks. To demonstrate the effectiveness of the proposed diffusion-based framework, we compare it against strong, established data-driven model-learning approaches, including GPs \cite{torrente2021data}\footnote{https://github.com/uzh-rpg/data\_driven\_mpc} and DNNs~\cite{shi2019neural}, which have shown success in learning quadrotor dynamics, and autoregressive models such as GPT-2 \cite{chen2021decision}\footnote{https://github.com/kzl/decision-transformer}, commonly used for sequence prediction tasks. For a fair tracking comparison, all models were trained on the same offline dataset to predict $\hat{\mathcal{H}}$ and were used with the control law  \eqref{control_law_adaptive_main}. Results are reported for 10 individual trials.

\subsection{Generalization to Unseen Complex Trajectories}\label{sec:primitives}

\begin{figure}[!h]
\centering
\includegraphics[scale=0.2]{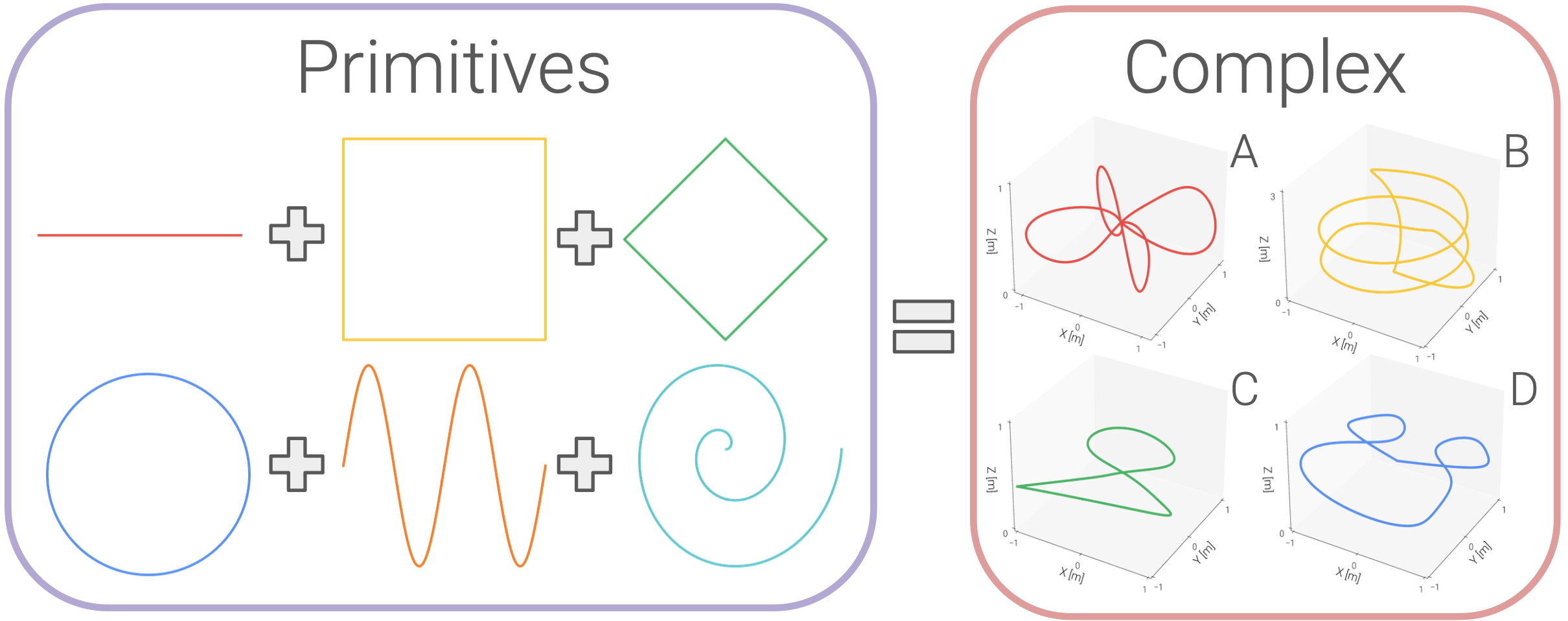}   
    \caption{\textit{Left:} Primitive trajectories, such as straight lines, squares, kites, circles, sinusoidal waves, and spirals, were executed in all three planes ($xy, yz, xz$) to capture a variety of motion patterns. \textit{Right:} Complex trajectories include a combination of sharp turns, loops, and simultaneous motion in all three ($x, y, z$) coordinates, illustrating challenging real-world flight scenarios.}
    \label{fig:primitve_complex}
\vspace{-4mm}
\end{figure}
To evaluate the generalization capability of the proposed framework to previously unseen environments, we trained the models on single demonstrations of basic \emph{primitive trajectories} (Figure~\ref{fig:primitve_complex}) with a mean velocity of $0.4$ m/s and tested on more \textit{complex trajectories} in a simulated environment. Our evaluation focuses on both predictive accuracy and trajectory tracking performance.
\begin{table}[!h]
\footnotesize
\renewcommand{\arraystretch}{1.4}
\caption{Prediction and Tracking comparison in simulation}
\centering
{
\scalebox{0.75}{
\begin{tabular}{|c||c|c|c|c||c|c|c|c|}
\hline
\multirow{2}{*}{} & \multicolumn{4}{c||}{Model Prediction Error (N)}                                   & \multicolumn{4}{c|}{Trajectory Tracking Error (m)}                                           \\ \cline{2-9} 
\textbf{RMSE} &  A        &  B         &  C         &  D   &  A        &  B         &  C         &  D       \\ \hline
GP-AC               & $0.097 $ & $0.108$ & $0.055 $ & $0.073 $ & $0.047 $ & $0.052$ & $0.039 $ & $0.030 $ \\ \hline
DNN-AC             & $0.091$ & $0.101$ & $0.046$ & $0.062$ & $0.032$ & $0.047$ & $0.031$ & $0.025$ \\ \hline
GPT-AC             & $0.088$ & $0.089$ & $0.044$ & $\mathbf{0.048}$  & $0.031$ & $0.038$ & $0.028$ & $\mathbf{0.022}$ \\ \hline
DM-AC            & $\mathbf{0.073}$ & $\mathbf{0.081}$ & $\mathbf{0.041}$ & $0.051$  & $\mathbf{0.022}$ & $\mathbf{0.021}$ & $\mathbf{0.019}$ & $0.023$ \\ \hline
\multicolumn{9}{c}{*\{$\cdot$\}-AC refers to \{$\cdot$\} based adaptive control.}           \end{tabular}
}}
\label{tab:primitive}
\vspace{-3mm}
\end{table}
For predictive accuracy, we tasked the quadrotor to follow the target trajectory using a vanilla PID controller while calculating $\mathcal{H}$ at each timestep from \eqref{eq.p_tau_rearrage}. We compare the predictive capabilities of the four baselines—GP, DNN, GPT, and our proposed diffusion model—using the features $\{\zeta, u\}$ at each timestep. We then assess trajectory tracking performance by implementing the control law \eqref{control_law_adaptive_main} and computing the root mean square (RMS) position errors for each method, as shown in Table \ref{tab:primitive}. The results demonstrate that the diffusion model consistently achieves superior predictive accuracy and tracking performance, exhibiting lower RMS errors compared to the other baselines. While GPT performs comparably to the diffusion model on one of the \textit{complex trajectories} (Complex D), the diffusion model outperforms all baselines across the remaining cases.
\begin{figure}[!h]
    \centering
\includegraphics[width=0.99\linewidth]{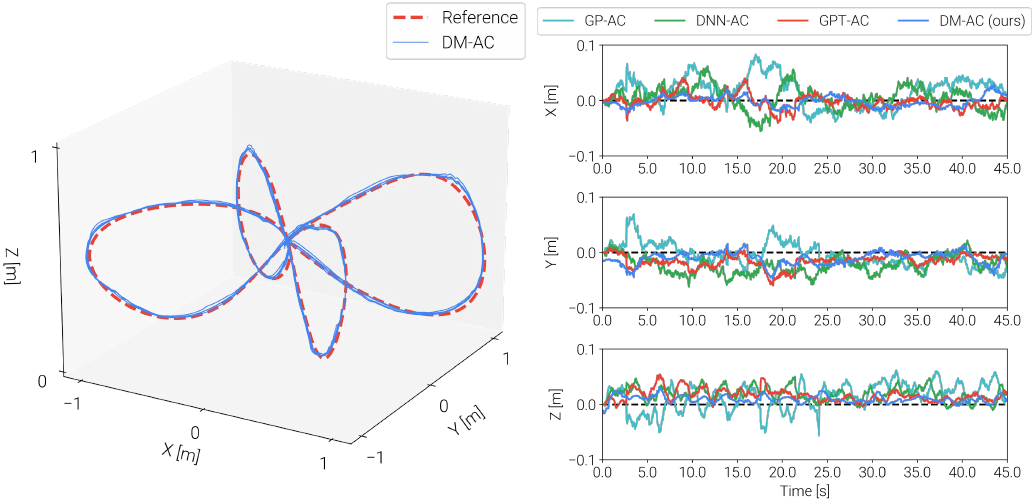}    \caption{\textit{Left:} Comparison between reference and tracked trajectory in real-world flight for DM-AC. \textit{Right:} Tracking error comparison on Complex `A' in real-world flight. RMS tracking errors for GP-AC, DNN-AC, GPT-AC, and DM-AC are $0.075$ m, $0.071$ m, $0.067$ m and $0.054$ m respectively.}
    \label{fig:complex_a}
    \vspace{-2mm}
\end{figure}

Given that aerodynamic effects are more accurately captured in real-world flight than in simulation, we further validate the framework using real-world data. Specifically, we evaluate the performance of DM-AC on the Complex A trajectory (Figure \ref{fig:primitve_complex}), using primitive demonstrations recorded at a velocity of $0.4 m/s$ for training. As shown in Figure \ref{fig:complex_a}, DM-AC achieves the lowest tracking error. This highlights the ability of DM-AC to track complex, unseen trajectories without requiring extensive data collection for each specific trajectory. This flexibility is driven by the learned conditional distribution $p(\mathcal{H}\vert \zeta,u)$, which enables DM-AC to effectively generalize across a wide range of challenging real-world conditions and scenarios.

\subsection{Adaptation against various Payload, Velocities and Wind}
\begin{figure}[!h]
    \centering
\includegraphics[width=0.99\linewidth]{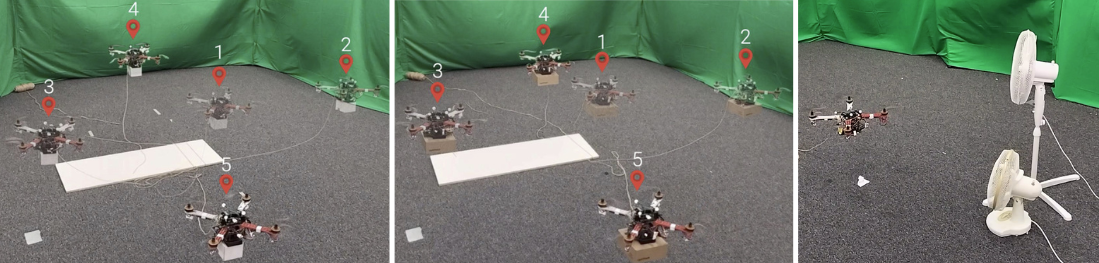}
\caption{\textit{Left} and \textit{Centre:} Sequential snapshots of the quadrotor with payloads $0.4$ kg and $0.8$ kg, respectively. \textit{Right:} Quadrotor in the presence of external wind generated by two fans.}
\label{fig.mass_velocity}
\vspace{-2mm}
\end{figure}

To assess the adaptability of the proposed framework, we test on a complex double infinity-shaped trajectory (where the first major loop is aligned along the $x$-axis and the second along the $y$-axis, as depicted in the top left of Figure \ref{fig:multimodal}) under varying payloads and wind disturbances. The evaluation is conducted under the following conditions: (a) tracking at velocities of $0.35$ m/s and $0.5$ m/s with a payload of $0.4$ kg, (b) tracking at the same velocities with a payload of $0.8$ kg, and (c) hovering at $(0,0,0.8)$ m under external wind disturbances (Figure \ref{fig.mass_velocity}). The diffusion model was trained on \emph{single real-world demonstrations of primitive trajectories} (left Figure \ref{fig.mass_velocity}) with payloads of $\{0.2, 0.6\}$ kg and velocities of $\{0.2, 0.4\}$ m/s. Notably, the payload configuration of 0.4 kg and velocity of $0.35$ m/s are within the range of the training data, but not explicitly used for training. In contrast, the $0.8$ kg payload and $0.5$ m/s velocity represent extrapolated scenarios that fall outside the training distribution.

\begin{table}[!h]
\footnotesize
\renewcommand{\arraystretch}{1.3}
\caption{\small Performance comparison in real world}
\centering
{
\scalebox{0.75}{
\begin{tabular}{|c||c|c||c|c||c|c|c|c|}
\hline
\multirow{2}{*}{} & \multicolumn{2}{c||}{Payload = $0.4$ kg }                                   & \multicolumn{2}{c||}{Payload = $0.8$ kg } & \multicolumn{1}{c|}{External }                    \\ \cline{2-5} 
\textbf{RMSE} (m) &  $v = 0.35$ m/s        &  $v = 0.5$ m/s         &  $v = 0.35$ m/s         &  $v = 0.5$ m/s   &  Wind       \\ \hline
 $\mathcal{L}_1$-Adaptive & $0.088 $ & $0.101$ & $0.105$ & $0.112 $ & $0.198 $   \\ \hline
GP-AC     & $0.086$ & $0.098$ & $0.102$ & $0.110$ & $0.192$  \\ \hline
DNN-AC    & $0.082$ & $0.095$ & $0.098$ & $0.107$ & $0.184$  \\ \hline
GPT-AC    & $0.075$ & $0.086$ & $0.089$ & $0.102$  & $0.177$  \\ \hline
DM-AC  & $\mathbf{0.069}$ & $\mathbf{0.074}$ & $\mathbf{0.077}$ & $\mathbf{0.089}$  & $\mathbf{0.162}$  \\ \hline
\multicolumn{5}{c}{*\{$\cdot$\}-AC refers to \{$\cdot$\} based adaptive control.}  
\end{tabular}
}}
\label{tab:adaption_payload_wind}
\vspace{-2mm}
\end{table}

Table \ref{tab:adaption_payload_wind} presents a comparison of the tracking performance across different model-based methods. The results clearly show that the model-based framework, which integrates adaptive control with a learned dynamic model, significantly outperforms the $\mathcal{L}_1$-adaptive controller, particularly in scenarios with varying payloads and external wind disturbances. The combined approach enables effective short-term adjustments and long-term compensation for dynamic variations, ensuring both robustness and accuracy. Additionally, the GP and DNN-based methods are less effective at handling the real-world multimodal uncertainties present in these scenarios, resulting in reduced tracking accuracy. While GPT achieves reasonable performance, its autoregressive nature introduces cascading errors, leading to slightly higher tracking errors. In contrast, DM-AC's ability to generate accurate multistep predictions enhances its performance, demonstrating resilience to variations in dynamic parameters and external disturbances.

\subsection{Runtime performance with multi-step predictions}
Sampling from diffusion models generally entails considerable computational expense due to the iterative nature of the denoising process during inference. In our framework, $\mathcal{H}$ is predicted over a finite time horizon $H$ (horizon length), which permits the sequential use of multistep predictions to potentially decrease the frequency of sampling from the diffusion model. However, employing predictions across multiple timesteps to calculate the control inputs $u$ may adversely affect tracking performance. To investigate the impact of computational budget on performance, we tasked the quadrotor to stabilize at a fixed coordinate of $(0.0,0.0,0.8)$ m for $60$ s using DM-AC while increasing the number of predictions used from a sample. Figure \ref{fig:horizon} illustrates the trade-off between performance and Execution Time per time Elapsed over the prediction horizon (ETE). The ETE is defined as the ratio of the time spent evaluating the diffusion model to the time elapsed between successive queries. Our findings reveal that stabilization accuracy is maintained with no crashes when the number of predictions used from a sample is low. However, as the number of predictions is increased to 32 and 64, stability deteriorates resulting in crash rates of $20\%$ and $60\%$, respectively. This degradation in performance can be attributed to the reliance on outdated observations of states and control inputs for estimating $\mathcal{H}$.
\begin{figure}[!h]
\centering
\includegraphics[width=0.97\linewidth]{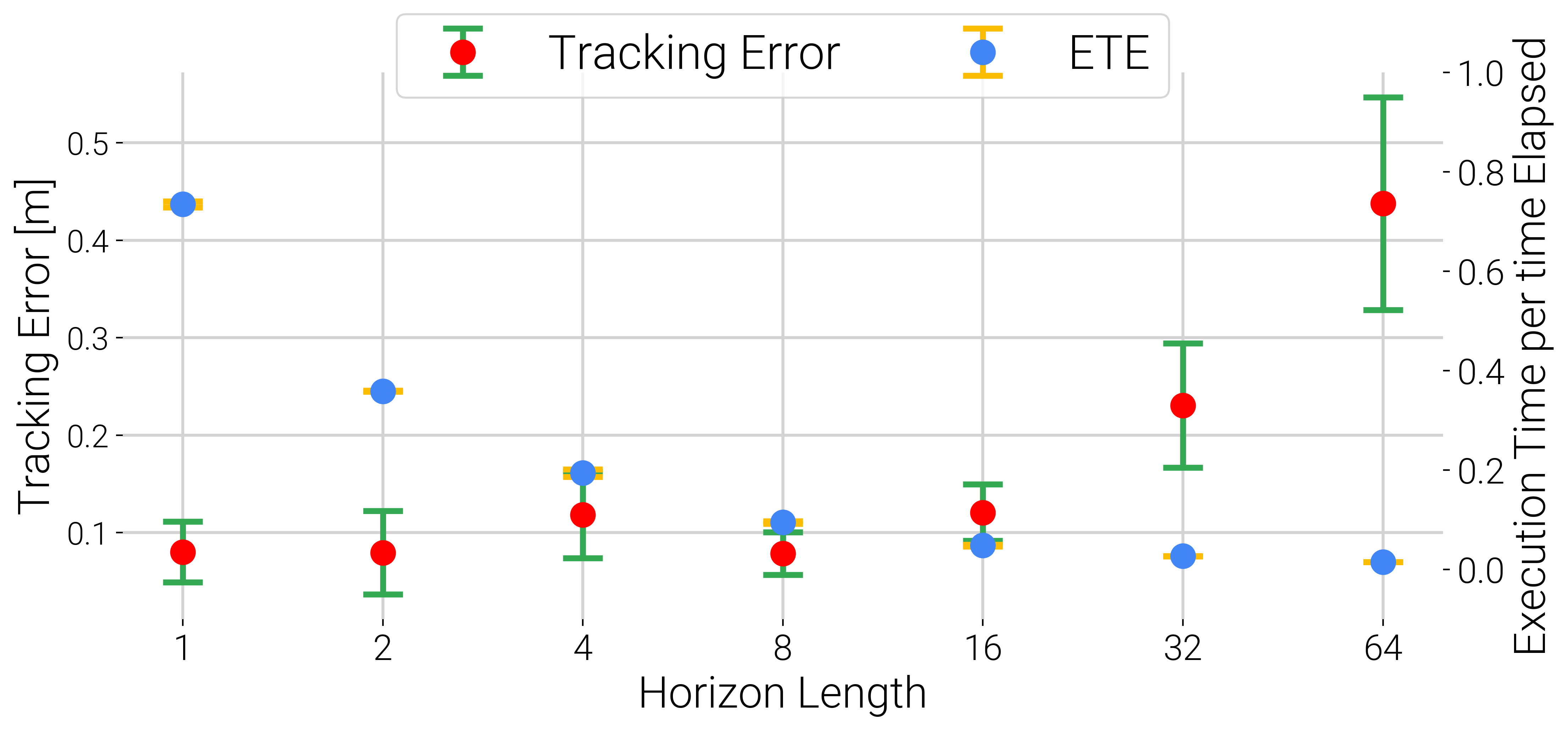}
    \caption{ Hovering accuracy evaluation and execution time for $60$ seconds with different horizon lengths $H$.}
    \label{fig:horizon}
\vspace{-4mm}
\end{figure}

\section{Conclusion and future work}
\label{sec:conclusion}
In this work, we introduced \emph{DroneDiffusion}, a novel framework that effectively leverages diffusion models to capture the stochastic and multimodal nature of quadrotor dynamics. Through training on primitive demonstrations, \emph{DroneDiffusion} exhibits strong generalization capabilities, accurately tracking complex flight paths, including scenarios not encountered during training. The framework's robustness was further validated on real hardware, demonstrating its adaptability to a wide range of payload and velocity variations, as well as its resilience against wind disturbances. These findings establish \emph{DroneDiffusion} as a promising solution for real-world deployment in dynamic and uncertain environments.  Future work can leverage recent advances in efficient sampling techniques~\cite{zheng2024dpm, song2023consistency} to enable onboard inference. Additionally, the potential for extending the framework to agile maneuvering and attitude control is a compelling direction.

\newpage
\appendix

\subsection{Multimodal Nature of $\mathcal{H}$}\label{app:multimodal_appendix}
Figures \ref{fig:multimodal_1} and \ref{fig:multimodal_2} show the empirical density of $\mathcal{H}$ for some additional flight path segments (marked by a {\color{red}red} box). For a quantitative analysis of the nature of $\mathcal{H}$, we divide the flight path (shown in the upper left of Figure \ref{fig:multimodal_1}) into $100$ segments and conduct the \emph{Hartigan's Dip-test for Multimodality}~\cite{hartigan1985dip} on each dimension of $\mathcal{H}$. The p-values along the first, second, and third dimension of $\mathcal{H}$ observed in the {\color{red}red-box} in Figure \ref{fig:multimodal} are $0.01095$, $0.01335$ and $0.00502$ respectively, indicating that even along individual dimensions, $\mathcal{H}$ exhibits multimodal nature with a significance level of $95\%$. To combine the p-values of all $100$ segments of the flight path, we use Fisher's chi-square method~\cite{fisher1970statistical}, Stouffer's method~\cite{stouffer1949american} and Tippett's method~\cite{lehmann1986testing} and report in Table \ref{tab:p_value}. The p-values indicate the multimodal nature of $\mathcal{H}$ in real-world flights.
\begin{figure}[h]
    \centering
    \includegraphics[width=0.95\linewidth]{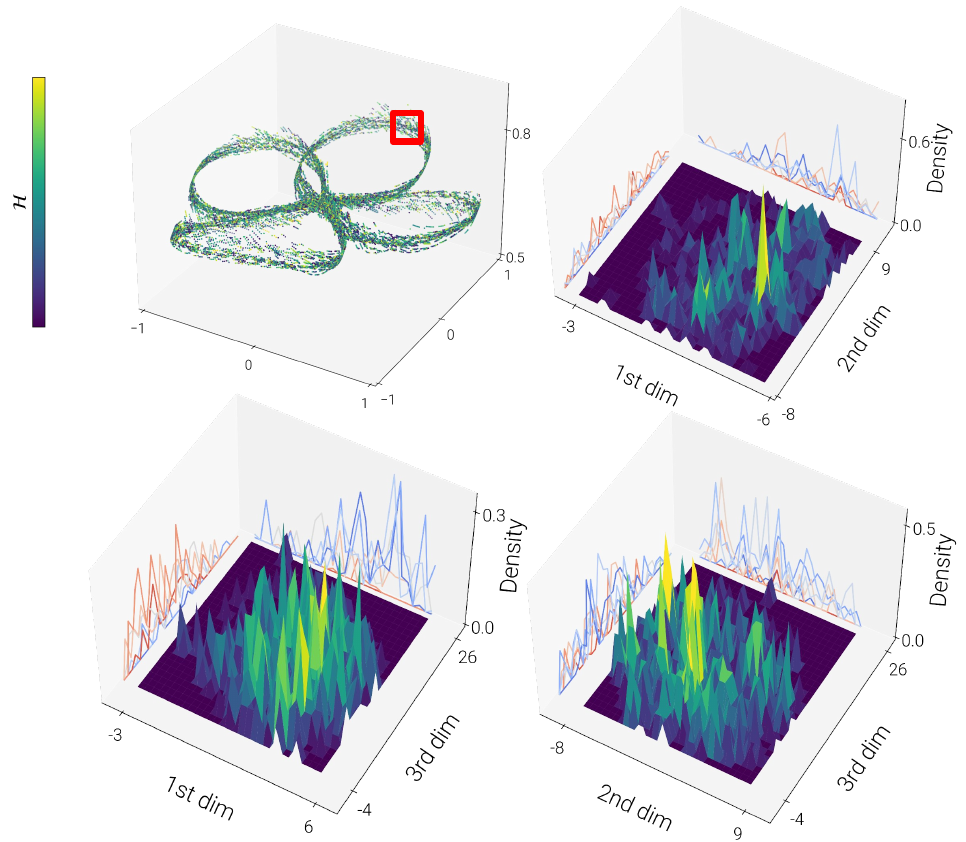}
    \caption{\small {\textit{Plots illustrating the empirical density of $\mathcal{H}$ observed in the {\color{red}red box}(\textit{Top left}) along two dimensions. \textit{Top right:} Density along first and second dimension, \textit{Bottom left:} Density along first and third dimension, and \textit{Bottom right:} Density along second and third dimension.
    }}}
    % \vspace{-6mm}
    \label{fig:multimodal_1}
\end{figure}
\begin{figure}[!h]
    \centering
    \includegraphics[width=0.95\linewidth]{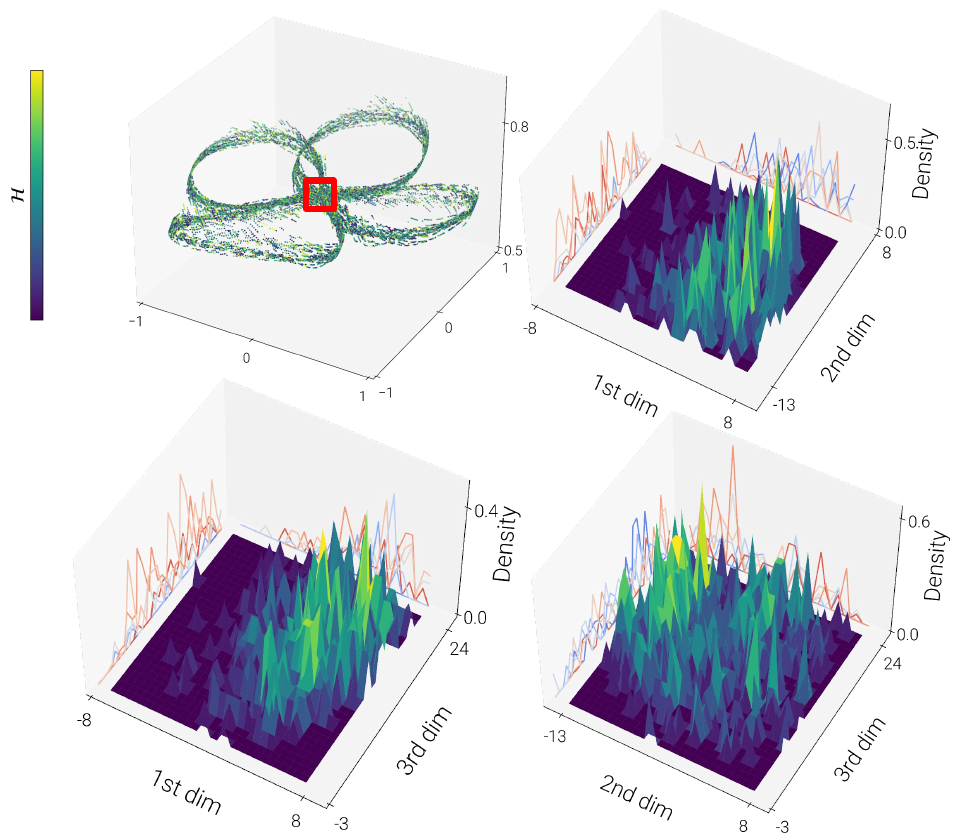}
    \caption{\small {\textit{Plots illustrating the empirical density of $\mathcal{H}$ observed in the {\color{red}red box}(\textit{Top left}) along two dimensions. \textit{Top right:} Density along first and second dimension, \textit{Bottom left:} Density along first and third dimension, and \textit{Bottom right:} Density along second and third dimension.
    }}}
    % \vspace{-6mm}
    \label{fig:multimodal_2}
\end{figure}

\begin{table}[!h]
\renewcommand{\arraystretch}{1.2}
\caption{\small {Combined p-values of Hartigan's dip statistic using different methods.}}
		\centering
{
\scalebox{0.95}{	\begin{tabular}{c|c|c|c}
\hline
& {Dimension 1} & {Dimension 2} & {Dimension 3} \\ \hline
{Fisher's $\chi$-square} & {0.0001}      & {0.0000}      & {0.0000}      \\ \hline
{Stouffer's}             & {0.0007}      & {0.0001}      & {0.0275}      \\ \hline
{Tippett's}              & {0.0000}      & {0.0000}      & {0.0004}      \\ \hline
\end{tabular}}}
\label{tab:p_value}
\end{table}

\subsection{Closed-loop Stability Analysis for DM-AC} \label{adaptive_control_design}
Multiplying the time derivative of sliding variable $s$ in  (\ref{sliding_var}) by $\bar{m}$ and using quadrotor dynamics (\ref{eq.p_tau_rearrage}) yields
\begin{equation}
    \bar{m}\dot{s} =  \bar{m}(\ddot{p} - \ddot{p}_d + \Phi \dot{e}_p) = u - \mathcal{H} - \bar{m}(\ddot{p}_d - \Phi \dot{e}_p)
\label{dot_s}
\end{equation}
Using control law $u$ from \eqref{control_law_adaptive_main} in \eqref{dot_s} yields
\begin{align}
    \bar{m}\dot{s} &= -\Lambda s + \hat{\mathcal{H}} - \mathcal{H}  -  \hat{\sigma}(t) (s/ \norm{s}) \\ 
    &= -\Lambda s  + \sigma -  \hat{\sigma}(t) (s/ \norm{s}).
\label{ms_dot}
\end{align}
Here $\sigma$ represents the overall uncertainty in the system resulting from the estimation error ($\hat{\mathcal{H}} - \mathcal{H}$). 

\begin{proof}
    The closed-loop stability analysis is carried out using the following Lyapunov function
\begin{equation}\label{lyp}
    \mathcal{V} =  \frac{1}{2}s^{\top} \bar{m} s + \frac{1}{2} (\hat{\sigma} - \sigma_m)^2.
\end{equation}
Taking the time derivative of $\mathcal{V} $ and using \eqref{ms_dot}, we obtain:
\begin{align}
\dot{\mathcal{V}} &= s^{\top} \bar{m}\dot{s} + (\hat{\sigma} - \sigma_m)\dot{\hat{\sigma}} \nonumber \\
&=  s^{\top}(- \Lambda s  + \sigma -  \hat{\sigma} (s/ \norm{s}))  + (\hat{\sigma} - \sigma_m)\dot{\hat{\sigma}} \nonumber \\
&\leq - \lambda_{\min}(\Lambda) \norm{s}^{2} + \norm{s} \norm{\sigma} - \hat{\sigma} \norm{s} + (\hat{\sigma} - \sigma_m)\dot{\hat{\sigma}}.
\label{v_dot}
\end{align}
The adaptive law (\ref{adaptive_law_main}) yields
\begin{equation}
(\hat{\sigma} - \sigma_m) \dot{\hat{\sigma}} = \norm{s}(\hat{\sigma} - \sigma_m) + \nu \hat{\sigma} \sigma_m -\nu \hat{\sigma}^2.
\label{adaptive_dot}
\end{equation}

Using \eqref{v_dot}, \eqref{adaptive_dot} and utilizing Assumption \ref{assum2}, the upper bound for $\sigma$ is given by $
    \norm{\sigma} \leq \sigma_m$. Therefore, we have: 
\begin{align}
\dot{\mathcal{V}} 
&\leq - \lambda_{\min}(\Lambda) \norm{s}^{2} - (\hat{\sigma} - \sigma_m)(\norm{s}  - \dot{\hat{\sigma}} ) \nonumber \\
&= - \lambda_{\min}(\Lambda) \norm{s}^{2} + (\nu \hat{\sigma} \sigma_m -\nu \hat{\sigma}^2 ) \nonumber \\
&\leq - \lambda_{\min}(\Lambda) \norm{s}^{2} - \frac{\nu}{2}  ((\hat{\sigma} - \sigma_m)^2 - {\sigma_m}^2 ). \label{new2}
\end{align}
Further, the definition of Lyapunov function yields
\begin{equation}
\mathcal{V} \leq \frac{1}{2} \bar{m}||s||^2 + \frac{1}{2} (\hat{\sigma} - \sigma_m)^2. \label{new}
\end{equation}
Substituting (\ref{new}) into (\ref{new2}), $\dot{\mathcal{V}}$ is simplified to
\begin{equation}
\dot{\mathcal{V}} \leq - \varrho \mathcal{V} + \delta,  \label{new3}
\end{equation}
where $\varrho \triangleq  \frac{\min\{ \lambda_{\min}(\Lambda),{\nu}/{2} \}} { \max\{(\bar{m}/2), (1/2)\}} >0$ and $\delta = \frac{1}{2}(\nu {\sigma_m}^2)$. Defining a scalar $\kappa$ such that $0<\kappa<\varrho $, $\dot{\mathcal{V}}$ in (\ref{new3}) yields
\begin{align}
\dot{\mathcal{V}} & \leq -\kappa \mathcal{V} - (\varrho - \kappa)\mathcal{V} + \delta.
\end{align}
Defining a scalar $  \mathcal{B} = \frac{\delta}{(\varrho - \kappa)} $, it can be noticed that $\dot{\mathcal{V}} (t) < - \kappa \mathcal{V} (t)$ when $\mathcal{V} (t) \geq \mathcal{ B}$, so that
\begin{align} \label{ball}
    \mathcal{V} & \leq \max \{ \mathcal{V}(0), \mathcal{B} \}, \forall t \geq 0,
\end{align}
and the Lyapunov function enters in finite time inside the ball defined by $\mathcal{B}$  and the closed-loop system remains UUB.
\end{proof}

\begin{remark} 
For continuity in control law, the term $s/||s||$ in (\ref{control_law_adaptive_main}) is usually replaced by a smooth function $\frac{s}{\sqrt{\norm{s}^2+\varpi}}$ with $\varpi$ being a positive user-defined scalar. This does not alter the overall UUB stability result. $\varpi = 0.1$ has been used to avoid chattering in our experiment.
\end{remark}

\subsection{Diffusion Model-Based Sliding Mode Control (DM-SMC)}\label{sec:dm-smc}
We also propose the Diffusion Model-Based Sliding Mode Control (DM-SMC) for quadrotors as:
\begin{align}
    u(t) &= -\Lambda s (t) + \bar{m} (\ddot{p}_d(t) - \Phi \dot{e}_p(t)) + \hat{\mathcal{H}}, \label{control_law}
\end{align}
Substituting \eqref{control_law} into \eqref{eq.p_tau_rearrage}, the closed-loop dynamics simply becomes $\bar{m}\dot{s} + \Lambda s = \sigma$ with the estimation error $\sigma = \mathcal{H} - \hat{\mathcal{H}}$. 
 As long as $\sigma$ is bounded, $e_p(t)$ will be bounded and converge to a value determined by the magnitude of $\norm {\sigma}$; if $\sigma$ is very small, $e_p(t)$ will converge to a very small value close to zero \cite{slotine1991applied}, as shown in Theorem \ref{theorem.stability}.

\begin{theorem} \label{theorem.stability}
    Under Assumptions \ref{assum1} and \ref{assum2}, the closed-loop trajectories of \eqref{eq.p_tau_rearrage} employing the control law \eqref{control_law} achieve exponential convergence of the composite variable $s$ to the error ball $\lim_{t \to \infty} \norm{s(t)} = \sigma_m/ \lambda_{\min}(\Lambda)$, with convergence rate $\lambda_{\min}(\Lambda) / \bar{m}$, where  $\lambda_{\min} (\Lambda)$  and $\lambda_{\min} (\Phi)$  are the minimum eigenvalue of the positive definite matrix $\Lambda$ and $\Phi$ respectively. Furthermore, $e_p(t)$ exponentially converges to the error ball:
% \vspace{-2mm}
\begin{align} \label{e_errror_ball}
    \lim_{t \to \infty} \norm{e_p(t)} = \frac{\sigma_m}{\lambda_{\min}(\Lambda)\lambda_{\min}(\Phi)} ~\text{with rate} ~\lambda_{\min}(\Phi).
\end{align}
\end{theorem}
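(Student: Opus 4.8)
The plan is to analyze the closed-loop error dynamics $\bar{m}\dot{s} + \Lambda s = \sigma$ directly via a quadratic Lyapunov function, establish exponential convergence of $s$ to an error ball, and then propagate that bound to $e_p$ through the definition of the sliding variable \eqref{sliding_var}, which acts as a stable linear filter from $s$ to $e_p$.

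First I would take the Lyapunov candidate $\mathcal{W} = \tfrac{1}{2} s^\top \bar{m} s$, differentiate along the closed-loop dynamics to get $\dot{\mathcal{W}} = s^\top(-\Lambda s + \sigma) \leq -\lambda_{\min}(\Lambda)\norm{s}^2 + \norm{s}\norm{\sigma}$, and invoke Assumption \ref{assum2} to write $\norm{\sigma} \leq \sigma_m$. This gives $\dot{\mathcal{W}} \leq -\lambda_{\min}(\Lambda)\norm{s}\big(\norm{s} - \sigma_m/\lambda_{\min}(\Lambda)\big)$, so $\norm{s}$ strictly decreases whenever $\norm{s} > \sigma_m/\lambda_{\min}(\Lambda)$; a standard comparison-lemma argument on $\tfrac{d}{dt}\norm{s} \le -(\lambda_{\min}(\Lambda)/\bar m)\big(\norm{s}-\sigma_m/\lambda_{\min}(\Lambda)\big)$ then yields exponential convergence of $\norm{s(t)}$ to the ball of radius $\sigma_m/\lambda_{\min}(\Lambda)$ with rate $\lambda_{\min}(\Lambda)/\bar m$. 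Alternatively one can solve the linear ODE for $s$ explicitly, bounding $\norm{s(t)} \le \norm{s(0)}e^{-(\lambda_{\min}(\Lambda)/\bar m)t} + \tfrac{\sigma_m}{\lambda_{\min}(\Lambda)}(1-e^{-(\lambda_{\min}(\Lambda)/\bar m)t})$ after diagonalizing $\Lambda$.

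Next I would treat \eqref{sliding_var}, $\dot{e}_p + \Phi e_p = s$, as an exponentially stable LTI system driven by the (now bounded) input $s$. Using the same quadratic-Lyapunov / comparison argument on $\tfrac{1}{2}\norm{e_p}^2$ — namely $\tfrac{d}{dt}\norm{e_p} \le -\lambda_{\min}(\Phi)\norm{e_p} + \norm{s}$ — and substituting the asymptotic bound $\norm{s} \to \sigma_m/\lambda_{\min}(\Lambda)$, one obtains $\norm{e_p(t)} \to \sigma_m/(\lambda_{\min}(\Lambda)\lambda_{\min}(\Phi))$ with rate $\lambda_{\min}(\Phi)$, which is exactly \eqref{e_errror_ball}.

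The main obstacle is being careful about the two-stage cascade: the ultimate bound on $e_p$ depends on the ultimate bound on $s$, and one must ensure the rates and radii compose correctly rather than naively multiplying transient terms. I would handle this by first letting the $s$-dynamics settle (taking $\limsup$), so that for any $\varepsilon>0$ there is a $T$ with $\norm{s(t)} \le \sigma_m/\lambda_{\min}(\Lambda) + \varepsilon$ for $t \ge T$, then applying the $e_p$ filter bound from time $T$ onward and letting $\varepsilon \to 0$. A minor secondary point is justifying the scalar differential inequalities for $\norm{s}$ and $\norm{e_p}$ at points where they vanish (use the upper Dini derivative, or work with the squared norms throughout), but this is routine.
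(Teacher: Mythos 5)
Your proposal is correct and follows essentially the same route as the paper: a quadratic Lyapunov function $\tfrac{1}{2}s^{\top}\bar{m}s$ with the comparison lemma to get the exponential bound $\norm{s(t)} \leq \norm{s(t_0)}e^{-(\lambda_{\min}(\Lambda)/\bar{m})(t-t_0)} + \sigma_m/\lambda_{\min}(\Lambda)$, followed by viewing \eqref{sliding_var} as a stable first-order filter from $s$ to $e_p$. The only difference is that you spell out the second (cascade) stage and the Dini-derivative technicality explicitly, where the paper simply cites the hierarchical/ISS argument.
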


\begin{proof}
    To analyze the overall closed-loop stability of the system, we use the Lyapunov function $\mathcal{V} = \frac{1}{2}s^{\top} \bar{m} s$. Under Assumption \ref{assum2}, taking the time derivative of $\mathcal{V} (s)$ and applying the control law  \eqref{control_law} to the system described by \eqref{eq.p_tau_rearrage}, we obtain:
\begin{align*}
\dot{\mathcal{V}} &= s^{\top} \bar{m}\dot{s} =  s^{\top} \bar{m}(\ddot{p} - \ddot{p}_d + \Phi \dot{e}_p) \\
&= s^{\top} (u - \mathcal{H} - \bar{m} \ddot{p}_d + \Phi \dot{e}_p)  \\
&= s^{\top} (-\Lambda s + \hat{\mathcal{H}} - \mathcal{H}) \\
&\leq - \lambda_{\min}(\Lambda) \norm{s}^{2} + \norm{s} \sigma_m.
\end{align*}
where $\lambda_{\min} (\Lambda)$  denotes the minimum eigenvalue of
the positive-definite matrix $\Lambda$. Furthermore,
\begin{align*}
\dot{\mathcal{V}} \leq - \frac{2 \lambda_{\min} (\Lambda)}{\bar{m}} \mathcal{V} + \sqrt{\frac{2\mathcal{V}}{\bar{m}}} \sigma_m
\end{align*}
Using the Comparison Lemma \cite{khalil2002control}, we define $\mathcal{W}(t) = \sqrt{\mathcal{V}(t)} = \sqrt{\bar{m}/2} \norm{s}$ and take its time derivative, $\dot{\mathcal{W}} = \dot{\mathcal{V}}/(2\sqrt{\mathcal{V}})$. This leads to the following inequality:
\begin{align*}
\norm{s(t)} \leq \norm{s(t_0)} \text{exp} \left(- \frac{\lambda_{\min}(\Lambda) }{\bar{m}}(t - t_0) \right) + \frac{\sigma_m}{\lambda_{\min}(\Lambda) }
\end{align*}

It can be shown that this leads to finite-gain $\mathcal{L}_p$ stability and input-to-state stability (ISS) \cite{chung2013phase}. This result is achieved since the hierarchical combination of $s$ and $e_p(t)$ in \eqref{sliding_var} leads to:
\begin{align*}
\lim_{{t \to \infty}} \norm{e(t)} = \lim_{{t \to \infty}} \frac{\norm{s(t)}} {\lambda_{\min}(\Phi)},
\end{align*}
which in turn yields \eqref{e_errror_ball}.
\end{proof}

\begin{remark} [Choice of gains and trade-off] \label{remark.gains_choice} The parameter $\bar{m}$ represents the nominal mass of the quadrotor, and if the exact mass $m$ is known, $\bar{m}$ can be set to $m$. Increasing the gains $\Phi$ results in faster convergence of the position error $e_p(t)$, as indicated by equation \eqref{e_errror_ball}, and a high $\Lambda$ value can reduce the error ball close to zero, due to stronger corrective actions. However, as shown in control laws  \eqref{control_law_adaptive_main} and \eqref{control_law}, very high values of $\Lambda$ and $\Phi$ increase control input demand, potentially leading to actuator saturation and instability if the gains are excessively high.

Further, in the adaptive law (\ref{adaptive_law_main}), the first term controls the growth/adaptation rate, while the negative second term provides stabilization against high gain instability, as defined in \cite[Ch. 8.3]{ioannou1996robust}. Low values of $\nu$ in (\ref{adaptive_law_main}) facilitate faster adaptation with higher adaptive gains, but this comes at the expense of increased control input demand. Conversely, high values of $\nu$  enlarge the ball $\mathcal{B}$ defined in \eqref{ball}, which is undesirable.

Therefore, the choice of the gains $\bar{m}, \Phi, \Lambda, \nu$ should consider application requirements and control input demands. 
    
\end{remark}

\textbf{Diffusion Model based Robust Control (DM-RC): } \label{robust_control_design}
If the upper bound on the estimation error is known (i.e., $\sigma_m$), then a Diffusion Model-based Robust Control (DM-RC) strategy could be implemented as:
\begin{align}
     u(t) &=   -\Lambda s + \bar{m} (\ddot{p}_d - \Phi \dot{e}_p) + \hat{\mathcal{H}} -  {\sigma}_m (s/ \norm{s}).   \label{}
\end{align}
Lower value of the robust gain ${\sigma}_m$ would reduce robustness against uncertainties, while a higher value might be overly conservative, leading to chattering or overcompensation.

\bibliographystyle{IEEEtran}
\bibliography{our_bib}

\end{document}